\newtheorem{theorem}{Theorem}
\newtheorem{lemma}[theorem]{Lemma}
\newcommand{\CB}{\mathbb{C}}
\newcommand{\CA}{\mathcal{A}}
\newcommand*{\myfont}{\fontfamily{pcr}\selectfont}
\title{S$^*$: A Heuristic Information-Based Approximation Framework for Multi-Goal Path Finding}
\author {
    K. Chour,\textsuperscript{\rm 1}
    S. Rathinam,\textsuperscript{\rm 1}
    R. Ravi \textsuperscript{\rm 2}\\
}
\begin{document}

\maketitle

\begin{abstract}


We combine ideas from uni-directional and bi-directional heuristic search, and approximation algorithms for the Traveling Salesman Problem, to develop a novel framework for a Multi-Goal Path Finding (MGPF) problem that provides a 2-approximation guarantee. MGPF aims to find a least-cost path from an origin to a destination such that each node in a given set of goals is visited at least once along the path. 
We present numerical results to illustrate the advantages of our framework over conventional alternates in terms of the number of expanded nodes and run time.


\end{abstract}

\noindent 

\newboolean{shortver}
\setboolean{shortver}{false}

\section{Introduction}


Multi-Goal Path Finding (MGPF) aims to find a least-cost path in a graph $G=(V,E)$ with non-negative edge costs such that the path starts from an origin ($s\in V$) and ends at a destination ($d\in V$), and each node in a given set of goals ($\bar{T}\subseteq V$) is visited at least once along the path. In the special case when the goal set is empty ($\bar{T}=\emptyset$), MGPF reduces to the least-cost path problem and is polynomial time solvable \cite{dijkstra1959note,lawler2001combinatorial}. For the general case, we must also determine the sequence in which the goals must be visited, and therefore, MGPF is a generalization of the Steiner\footnote{Any node that is \emph{not} required to be visited is referred to as a \emph{Steiner node}. A path may choose to visit a Steiner node if it helps in either finding feasible solutions or reducing the cost of travel.} Traveling Salesman Problem~\cite{SteinerTSP} and is NP-Hard. 
MGPF arises in numerous aerial robot and logistics applications as discussed in recent surveys~\cite{UAVsurvey,macharet_campos_2018}. 

\begin{figure*}
    \centering
    \includegraphics[scale=0.41]{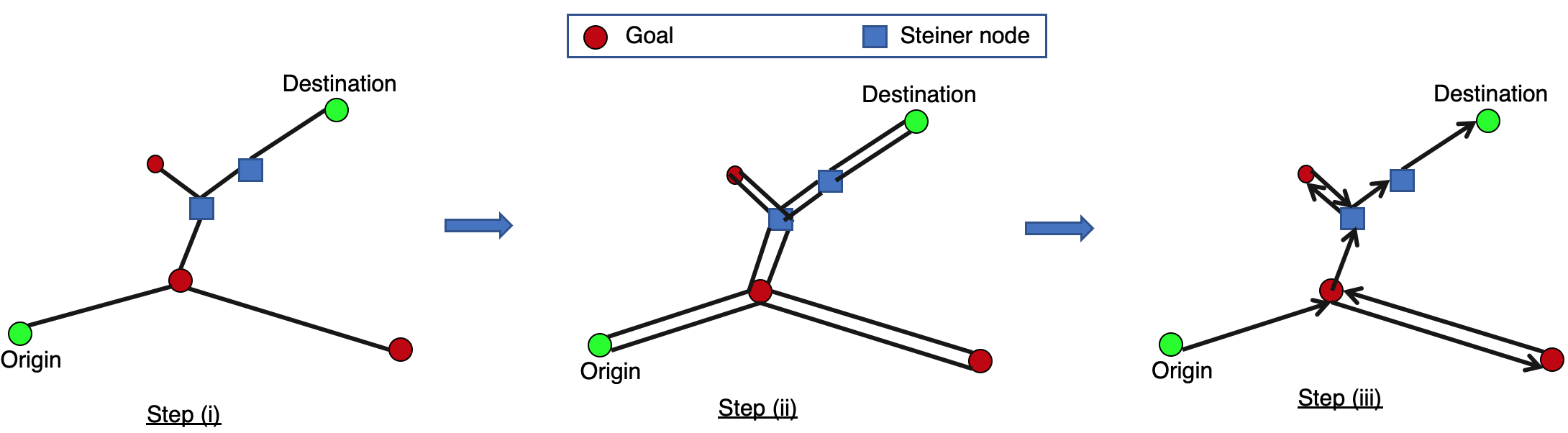}
    \caption{An approximation algorithm for MGPF: (i) Construct a suitable Steiner tree, (ii) Double the edges in the Steiner tree to form an Eulerian graph and (iii) Find a path in the Eulerian graph from the origin to destination that visits each goal at least once, while discarding remaining edges.} 
    \label{fig:SteinerApprox}
\end{figure*}

Existing 2-approximation algorithms for the MGPF and its variants \cite{Kou1981,Mehlhorn1988} rely on three steps (Fig. \ref{fig:SteinerApprox}): (i) find a suitable Steiner tree spanning the origin, goals and the destination, (ii) double the edges in the Steiner tree to obtain an Eulerian graph, and then finally (iii) find a path in  the Eulerian graph that is feasible for the MGPF problem. The 2-approximation ratio and the computational complexity of these algorithms primarily relies on the Steiner tree construction in step (i); this construction must be done so that the cost of the Steiner tree constructed is at most the optimal cost of the MGPF. Well known primal-dual~\cite{Ravi1994,AKR,goemans1997primal} or minimum spanning tree based algorithms~\cite{Kou1981,Mehlhorn1988} can be used to find such a Steiner tree.


The objective of this article is to propose a new approximation framework that fuses existing methods for Steiner tree construction with heuristic information to develop new algorithms for step (i) of the approximation algorithm for MGPF (Fig. \ref{fig:SteinerApprox}). As a consequence, this framework provides new efficient 2-approximation algorithms for MGPF. Our work follows the spirit of the A* \cite{hart1968aFormalBasis} (or the bi-directional \cite{Pohl:1969}) heuristic search methods where a best-first search procedure from the origin (or the origin and the destination) was combined with heuristic information to develop new algorithms for the least-cost path problem. In fact, in the special case when the goal set is empty, the primal-dual algorithm~\cite{AKR} for the Steiner tree problem, depending on how it is applied, reduces to either the uni-directional search \cite{dijkstra1959note} or bi-directional search \cite{Nicholson} algorithm available for the least-cost path problem. Therefore, one can view our work in this article as a direct generalization of the A* and the bi-directional heuristic search procedures to Steiner tree computation and to MGPF. 

We refer to the proposed framework as Steiner$^*$ (S$^*$) and present its two variants. In the first variant, we use A$^*$ to grow closed and open sets from each node in $T:=\{s,d\}\bigcup \bar{T}$ and simultaneously construct a Steiner tree when relevant bounding conditions are satisfied. We refer to this variant as S$^*$-unmerged since we do not merge the closed sets corresponding to distinct nodes in $T$ even when they overlap with each other. The second variant is referred to as S$^*$-merged where the closed sets are merged when appropriate bounding conditions are satisfied akin to what happens in the Kruskal's minimum spanning tree algorithm \cite{kruskal}. The S$^*$-merged framework is agnostic to the underlying optimality conditions used for the least-cost path computations during the search process; specifically, one can use the optimality conditions from A* \cite{hart1968aFormalBasis} or bi-directional search \cite{Nicholson} or Meet in the Middle (MM) \cite{MM} algorithms in the S$^*$-merged framework and guarantee the required properties. We note here that while there are several bi-directional heuristic search methods \cite{Champeaux1983,KWA1989,Eckerle1994,Kaindl1997,Barker2015,MM,NBS}, we use MM \cite{MM} as a representative bi-directional search method for least-cost path computations in our framework as our goal is to address the MGPF; other methods will be considered in the future. Like uni-directional and bi-directional heuristic search, the expectation here is that combining existing algorithms with heuristic information will reduce the number of expanded nodes and possibly the computation time (Fig. \ref{fig:expectation}).\ifthenelse{\boolean{shortver}}{%
\footnote{Counterexamples to this expectation is presented in the appendix of \cite{chour2021s}.}}

\begin{figure}
    \centering
    \includegraphics[scale=0.35]{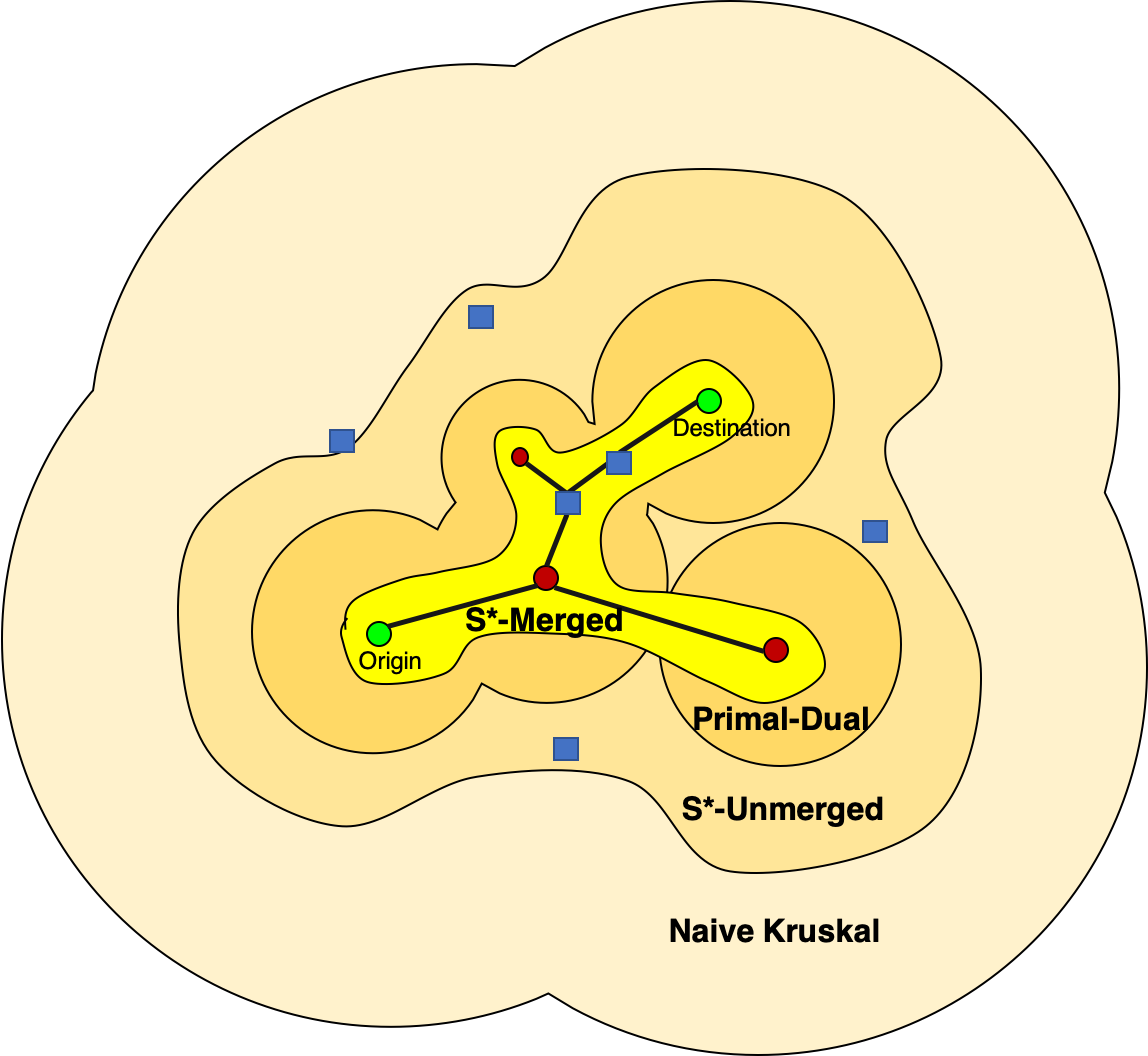}
    \caption{Each shaded region shows our expectation on the set of expanded nodes for conventional solvers and the proposed framework. Here, naive Kruskal is the popular approach \cite{Kou1981} described in the Background and Preliminaries section.}
    \label{fig:expectation}
\end{figure}

After describing the new algorithms with theoretical performance guarantees for the Steiner tree construction, we provide extensive computational results on the performance on the proposed framework for instances derived from the Multi-Agent Path Finding (MAPF) library\footnote{\urlstyle{tt}\url{ https://movingai.com/benchmarks}} \cite{stern2019multi}. While these numerical results clearly illustrate the benefits of the proposed framework in several scenarios, we do not claim that the proposed framework is superior to conventional solvers for each and every instance of the MGPF problem. Nevertheless, the proposed framework is the first of its kind for MGPF and provides a new line of research for related problems.

\section{Background and Preliminaries}\label{sec:background}


Let $c(u,v)\geq 0$ denote the cost of the edge joining two distinct vertices $u$ and $v$ in $G=(V,E)$. The cost of a path is defined as the sum of the edges in the path. Let $cost^*(u,t)$ denote the cost of the least-cost path from $u$ to $t$ in $G$. Let $\bar{h}_t(u)$ be an underestimate on $cost^*(u,t)$. To simplify our presentation and proofs, we assume $\bar{h}_t(u)$ is obtained using a consistent heuristic~\cite{hart1968aFormalBasis}.

A Steiner tree is a connected subgraph of edges that spans a subset of relevant nodes $T =\{s,d\}\bigcup \bar{T} \subseteq V$ also commonly referred to as terminals. Finding a Steiner tree which minimizes the sum of the cost of the edges in the tree is NP-Hard~\cite{Karp72}. Therefore, a popular approach for finding a suitable Steiner tree in step (i) of the approximation algorithm (Fig. \ref{fig:SteinerApprox}) is to find a Minimum Spanning Tree (MST) in the metric completion\footnote{The metric completion of the terminals is a complete weighted graph on the terminals $T$ where the weight of an edge between a pair of terminals is the minimum cost of a path between them in $G$.} of the terminals, and then replace each edge in the MST by the corresponding least-cost path in $G$. There are several implementations of this approach \cite{Kou1981, Mehlhorn1988, Ravi1994, goemans1997primal}. 
Irrespective of the specific implementation used, the Steiner tree construction relies on satisfying the following key properties:

\begin{itemize}[leftmargin=.8cm]
    \item[{\bf (SP)}] Ensure that when a path between a pair of terminals in $T$ is confirmed by a path finding algorithm, it is indeed a least-cost path in $G$ between them.
    \item[\bf (K)] When a path $P$ between two terminals is included in the Steiner tree, it obeys Kruskal's condition \cite{kruskal} for inclusion in the MST of the metric completion of $T$. This requires that all paths between any pair of terminals with costs lower than the cost of $P$ have been considered, and $P$ does not create any cycles when added to the current tree.
\end{itemize}
If a Steiner tree algorithm satisfies the above key properties, then it is known that the cost of the Steiner tree obtained using the algorithm is at most equal to the optimal MGPF cost which leads to a 2-approximation algorithm for MGPF~\cite{Kou1981, Mehlhorn1988}. The variants of the S* framework ensures that these two key properties are maintained thus proving the approximation guarantee of the final MGPF solution obtained using this approach.

\section{S*-unmerged}

\begin{algorithm}[t!]
    \SetAlgoLined
	\textbf{Inputs:} \\
	$G=(V,E)$, $c(u,v) \ \forall u,v \in V$, $T\subseteq V$\\
	$\bar{h}_t(u) \ \forall t \in T, u \in V$ \tcp{consistent lower bounds on $c^*(u,t)$}
	\textbf{Output:}\\
	$S_T$ \tcp{Steiner tree spanning T}
	\textbf{Initialization:}\\
    $C_t := \{t\} \ \forall t \in T$ \tcp{Closed sets} 
    $O_t:= \{u: (u,t)\in E\} \ \forall t \in T$ \tcp{Open sets} 
    $D_t := T\setminus \{t\}  \ \forall t \in T$ \tcp{Destination sets} 
    $g_t(u):=c(t,u) ~\forall u\in V, t\in T$ \\
    $f_t(u):=g_t(u)+h(u,D_t),~\forall u\in V,t\in T $ \\
    $Q=\emptyset$  \tcp{Paths eligible for $S_T$} 
    $S_T := \emptyset$\\
    \vspace{.1cm}
    {\bf Main Loop:} \\
	\While{all the terminals are not connected in $S_T$}
 	{   
 	    $u_t:= \arg\min_{u\in O_t} f_t(u) ~\forall t\in T$ \tcp{$t$ nominates best node}
        $t^* = \arg \min \{f_t(u_t): t\in T\}$ \tcp{Choose nominator with least $f$ cost}
         $C_{t^*}:=C_{t^*}\cup \{u_{t^*}\}$; 
         $O_{t^*}:=O_{t^*}\setminus \{u_{t^*}\}$ \\
         \For{$v\in \{v: (v,u_{t^*})\in E, v\notin C_{t^*} \}$}
         {  $O_{t^*}:=O_{t^*}\cup \{v\}$ \\
         $g_{t^*}(v):= \min(g_{t^*}(v),g_{t^*}(u_{t^*})+c(u_{t^*},v))$\\
         $f_{t^*}(v):= g_{t^*}(v) + h(v,D_{t^*}), \ \forall v \in O_{t^*}$\\
         } 
          \If{$u_{t^*}\in D_{t^*}$}{
             $Q:=Q\cup \{PATH(t^*,u_{t^*})\}$ \color{blue}\\
             \textcolor{blue}{$D_{t^*}:= D_{t^*} \setminus \{u_{t^*}\}$ \label{unm-prio-begin}\\
            $f_{t^*}(v):= g_{t^*}(v) + h(v,D_{t^*}), \ \forall v \in O_{t^*}$\\
            $D_{u_{t^*}}:= D_{u_{t^*}} \setminus \{t^*\}$\\
            $f_{u_{t^*}}(v):= g_{u_{t^*}}(v) + h(v,D_{u_{t^*}}), \ \forall v \in O_{u_{t^*}}$\label{unm-prio-end}
             }
             }
       $S_T:=UpdateSteinerTree(S_T,Q,f)$ 
 	}
 	\textbf{return } $S_T$
 	\caption{{\myfont S\textsuperscript{*}-unmerged}}
 	\label{Alg:unmerged}
\end{algorithm}

\begin{algorithm}[h!]
    \SetAlgoLined
    {$f^* := \min_{t} \min_{u\in O_t} f_{t}(u)$ \\
    $Q' = Q$ \tcp{Process paths locally}
	\While{$Q'$ is nonempty}{
	Choose a path $p\in Q'$ with the cheapest cost joining components $C_1,C_2\in S_T$ \\
	\If{adding $p$ to $S_T$ does not form a cycle AND $cost(p) \leq f^*$}
	    {Add $p$ to $S_T$\\
         $\bar{C} := C_1 \cup C_2$\\
	     \For{$t\in \bar{C}\cap T$}{$D_t:= T \setminus \bar{C}$ \label{unm-essential}\\
	     $f_{t}(v):= g_{t}(v) + h(v,D_{t}),$ $\forall v\in O_{t}$  
	     }
	     }
	   	 Delete $p$ from $Q'$

    }
    \textbf{return} $S_T$ 
 	\caption{ $UpdateSteinerTree(S_T,Q,f)$}
 	\label{Alg:updateSteinetree}
 	}
\end{algorithm}

{\myfont S\textsuperscript{*}-unmerged} (Algorithm \ref{Alg:unmerged}) uses A$^*$ to build closed and open sets from each terminal in $T$. We borrow the usual definitions of $f$, $g$ and $h$ costs from A$^*$; however, in this framework, each terminal maintains its own version of the $f$, $g$ and $h$ costs for each of the nodes in its closed and open sets. Each terminal $t\in T$ maintains a destination list $D_t$ which includes all the terminals not yet connected to $t$ in the Steiner tree. In Algorithm \ref{Alg:unmerged}, for any $S\subset V$, $h(u,S)$ is defined as the underestimate from node $u$ to reach any terminal in $S$, $i.e.$, $h(u,S) := \min\{\bar{h}_t(u): t\in T \cap S\}$. 

After the initialization step, during each iteration of the the main loop, {\myfont S\textsuperscript{*}-unmerged} proceeds to let each terminal nominate a node with the least $f$-cost from its open set. The best nominated node $u_{t^*}$ with the smallest $f$ value is then moved to the corresponding terminal($t^*$)'s  closed set. Next, $u_{t^*}$ is expanded and the corresponding $g$ and $f$ costs of its neighbors are updated (lines 19-23 in Algorithm \ref{Alg:unmerged}). If paths are confirmed between two distinct terminals, they are also added to $Q$ (line 25 in Algorithm \ref{Alg:unmerged}) which maintains a list of paths eligible for the Steiner tree construction. Blue lines 26-29 in Algorithm \ref{Alg:unmerged} are referred as {\it re-prioritization} steps and are not mandatory; however, they may help in reducing the total number of expanded nodes during the search process at the expense of additional computation time. Finally, S*-unmerged checks (line 31 in Algorithm \ref{Alg:unmerged}) if the Steiner tree ($S_T$) needs to be updated based on the changes in $Q$ or the bounding function $f$. 

The procedure in Algorithm \ref{Alg:updateSteinetree} ensures paths are added to $S_T$ only if they satisfy the property (K). 
First, we consider the minimum $f$-value ($f^*$) among all the nodes in the open sets of all the terminals. Note that this will be $f$-cost of the next best nominated node in the algorithm. Therefore, all confirmed paths with costs at most equal to $f^*$ from all terminals have been explored by now.
We then process the confirmed paths in $Q$ locally in {\it increasing order of cost} in the following way: 
\begin{itemize}
    \item If a path is between two terminals that are not connected in $S_T$ and its cost is at most $f^*$ (line 5 of Algorithm \ref{Alg:updateSteinetree}), we include it in $S_T$. We also update the destination sets of the terminals due to changes in $S_T$ and correspondingly change the $f$-costs for nodes in the open sets of these terminals (lines 8-11 of Algorithm \ref{Alg:updateSteinetree}). 
    \item If a path doesn't satisfy the conditions in line 5 of Algorithm \ref{Alg:updateSteinetree},  we ignore and delete it locally since it does not obey property (K).
\end{itemize}

\subsection{Correctness of S$^*$-unmerged}

First, we observe that the open and closed sets of terminals are updated in the same manner as the $A^*$ algorithm. 
Furthermore, 
as the heuristic costs are consistent, $g_t(u)$ for any node $u\in C_t$ is equal to $cost^*(t,u)$. Moreover, the shortest paths found between $t$ and any node in $C_t$, and their corresponding $g$ costs remain valid even after the changes in $D_t$ since we recompute the lower bounds $h$ to remain consistent, and use them in updating the bounds on the open sets. This shows that the paths finalized by the algorithm obey the (SP) property. 
As a result of the condition used in Algorithm~\ref{Alg:updateSteinetree}, S$^*$-unmerged also satisfies the (K) property. 
Hence, we have the following theorem.

\begin{theorem}\label{thm:unmerged}
S*-unmerged finds a Steiner tree of cost at most equal to the optimal MGPF cost. This Steiner tree can then be used to obtain a 2-approximation algorithm for MGPF. 
\end{theorem}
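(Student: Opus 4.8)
The plan is to reduce the statement to two ingredients: (i) the structural facts that the algorithm's output $S_T$ satisfies properties (SP) and (K), which the discussion preceding the theorem has already established; and (ii) the standard ``Steiner-walk'' machinery behind Figure~\ref{fig:SteinerApprox}, namely that any terminal-to-terminal tree built in Kruskal order from genuine least-cost paths has cost at most the optimal MGPF cost, and that doubling such a tree yields a feasible MGPF path of at most twice that cost. Since (i) is in hand, the work is to spell out (ii) and to check that the loop actually terminates with a spanning tree.

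First I would argue termination and well-definedness. For each terminal $t$, the open/closed bookkeeping and the $g_t,f_t$ updates are exactly those of A$^*$ run from $t$ with a consistent heuristic, so after finitely many expansions every least-cost terminal-to-terminal path is confirmed and placed in $Q$; $UpdateSteinerTree$ then processes $Q$ in increasing order of cost and merges components whenever the accepted path joins two distinct ones, so the exit condition ``all terminals connected in $S_T$'' is eventually met. Next comes the core cost bound. Let $\bar G$ be the metric completion of $T$. By (SP), every path the algorithm finalizes between two terminals is a least-cost path in $G$, so its cost equals the corresponding edge weight in $\bar G$. By (K), the paths admitted into $S_T$ are considered in nondecreasing cost order and are added only when acyclic, so the multiset of $\bar G$-edges underlying $S_T$ is a minimum spanning tree of $\bar G$; hence $\mathrm{cost}(S_T)\le \text{MST}(\bar G)$ (with equality unless substituted least-cost paths overlap in $G$).

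It then remains to show $\text{MST}(\bar G)\le \text{OPT}_{\text{MGPF}}$ and to obtain the factor $2$. For the former: let $P^*$ be an optimal MGPF walk from $s$ to $d$ visiting every goal, and list the terminals $t_0=s,t_1,\dots,t_k$ in order of first appearance along $P^*$. The sub-walks of $P^*$ between consecutive $t_i$'s are edge-disjoint portions of $P^*$, each of length at least the least-cost distance between its endpoints, so the Hamiltonian path $t_0\!-\!t_1\!-\!\cdots\!-\!t_k$ in $\bar G$ has weight at most $c(P^*)=\text{OPT}_{\text{MGPF}}$; a Hamiltonian path is a spanning tree, hence $\text{MST}(\bar G)\le \text{OPT}_{\text{MGPF}}$ and therefore $\mathrm{cost}(S_T)\le \text{OPT}_{\text{MGPF}}$. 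For the approximation: double every edge of $S_T$ to get a connected Eulerian multigraph, delete one of the two copies of the unique $s$--$d$ path in $S_T$ so that exactly $s$ and $d$ have odd degree, take an Euler trail from $s$ to $d$, and short-cut repeated terminals and Steiner nodes by direct least-cost paths; short-cutting never increases cost because least-cost distances obey the triangle inequality. The resulting $s$--$d$ walk is feasible for MGPF and has cost at most $2\,\mathrm{cost}(S_T)-\mathrm{cost}(s\text{--}d\text{ path in }S_T)\le 2\,\mathrm{cost}(S_T)\le 2\,\text{OPT}_{\text{MGPF}}$.

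I expect the main obstacle to be the precise justification of the ``Kruskal order'' claim used in the second paragraph: when $UpdateSteinerTree$ commits a path $p$ to $S_T$, one must certify via the threshold $f^\ast=\min_t\min_{u\in O_t} f_t(u)$ that no terminal-to-terminal path of cost strictly less than $\mathrm{cost}(p)$ remains undiscovered. This rests on each terminal's A$^*$ search being properly ordered, which in turn requires the repeatedly recomputed lower bounds $h(\cdot,D_t)$ to stay consistent after each shrinking of $D_t$ — the point the preceding correctness discussion sketches and that deserves careful treatment here. Everything else (doubling, Euler trail, short-cutting, and $\text{MST}(\bar G)\le\text{OPT}$) is the standard argument and should go through routinely.
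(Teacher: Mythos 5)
Your proposal is correct and follows essentially the same route as the paper: establish that the algorithm's A$^*$-style updates with consistent heuristics give property (SP) and that the $f^*$-threshold in $UpdateSteinerTree$ gives property (K), then invoke the standard argument that such a tree has cost at most the optimal MGPF cost and that doubling/short-cutting yields a feasible path of at most twice that cost. The only difference is that you spell out the standard machinery (MST of the metric completion bounded by the optimal walk, Euler trail, short-cutting) that the paper delegates to the cited references \cite{Kou1981,Mehlhorn1988}, and you explicitly flag the consistency of $h(\cdot,D_t)$ under shrinking destination sets, which the paper likewise only asserts.
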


\section{S*-merged}

\begin{algorithm}
    \SetAlgoLined
    \textbf{Input:}\\
	$G=(V,E)$, $c(u,v) \ \forall u,v, \in V$, $T\subseteq V$\\
	$\bar{h}_t(u) \ \forall t \in T, u \in V$ \tcp{consistent lower bounds on $c^*(u,t)$} 
	\textbf{Output:}\\
	$S_T$ \tcp{Steiner tree spanning T}
    \textbf{Initialization:}\\
    $\CB :=\{\{t\},~\forall t\in T\}$\\
    $C_\CA:=\CA,~\forall \CA \in \CB$ \tcp{Closed sets of $\CA$}
    $O_\CA:=\{u:(u,t) \in E, t\in \CA\}, ~\forall \CA \in \CB$ \tcp{Open sets of $\CA$}
    $D_\CA:=T\setminus \CA$, $\forall \CA\in \CB$\tcp{Destination sets of $\CA \in \CB$}
    $g_\CA(u):=\min\{c(t,u):~t\in \CA, t\in T\},~\forall u\in V, \CA \in \CB$ \\
    $f_\CA(u):=g_\CA(u) + h(u, D_\CA),~\forall u\in V, \CA \in \CB$ \tcp{Lower bound on $cost^*(u,D_\CA)$}
    $Q=\emptyset$ \tcp{Paths eligible for $S_T$}
    $S_T=\emptyset$\\
    \vspace{0.3cm}
    \textbf{Main Loop:}\\
    \While{all the terminals are not connected in $S_T$}
    {
       $u_\CA=\arg\min_{\mathrm{u\in O_\CA}} f_\CA(u)~\forall \CA\in \CB$ \tcp{$\CA$ nominates best node}
       $\CA^*=\arg\min\{f_\CA(u_\CA):\CA \in \CB\}$ \tcp{Choose nominator with least $f$ cost}
       $C_{\CA^*} = C_{\CA^*} \cup \{u_{\CA^*}\}$;
       $O_{\CA^*} = O_{\CA^*} \setminus \{u_{\CA^*}\}$\\
        \For{$v\in \{v:(v,u_{\CA^*})\in E, v\notin C_{\CA^*}\}$}
         {
                $O_{\CA^*} = O_{\CA^*}\cup \{v\}$, \\
                $g_{\CA^*}(v) = \min\{g_{\CA^*}(v), g_{\CA^*}(u_{\CA^*}) + c(u_{\CA^*},v)\}$\\
                $f_{\CA^*}(v) = g_{\CA^*}(v) + h(v,D_{\CA^*})$        
          }
       \For{$\CA \in \CB, \CA\neq \CA^*$}
       {\If{\textcolor{red}{Path Confirmation Condition between $\CA^*$ and $\CA$ is satisfied}\label{test}}
       {
       If $terminal_{\CA}(u)$ denotes a terminal in $\CA$ that is nearest to $u$, 
       $Q:=Q\cup PATH^*(terminal_\CA(u),terminal_{\CA^*}(u))$\\ \color{blue} 
       \textcolor{blue}{$D_{\CA^*}:=D_{\CA^*}\setminus \{t:t\in \CA\cap T\}$\label{merged-prio-begin}\\
        $\forall v\in O_{\CA^*}, f_{\CA^*}(v) = g_{\CA^*}(v) + h(v,D_{\CA^*})$\\
       $D_{\CA}:=D_{\CA}\setminus \{t:t\in \CA^*\cap T\}$\\
       $\forall v\in O_{\CA}, f_{\CA}(v) = g_{\CA}(v) + h(v,D_{\CA})$\label{merged-prio-end}
       }

        }
    }
    Let $\bar{\CB}$ denote all the info pertaining to $\CB$\\
    $[S_T,\bar{\CB}] = UpdateSteinerTree\_Merge(S_T,Q,\bar{\CB})$ 
    }
       
    \caption{{\myfont S\textsuperscript{*}-merged}}
    \label{Alg:S*-merged}
\end{algorithm}

\begin{algorithm}[h!]
    \SetAlgoLined
    $f^* = \max \{  \min_{\CA} (\min_{u\in O_\CA} f_{\CA}(u)),\min_{\CA \neq \CA'} (rmin_{\CA} + rmin_{\CA'}) \}$\\
    $Q' = Q$ \tcp{Process paths locally}
	\While{$Q'$ is nonempty}
	{
	Choose a path $p\in Q'$ with the cheapest cost joining components $C_1,C_2\in S_T$ \\
	\If{adding $p$ to $S_T$ does not form a cycle AND $cost(p) \leq f^*$}
	    {
	    Suppose $p$ connects components $\CA_1,\CA_2\in \CB $\\
	    $\bar{\CB}=ComponentMerge(\CA_1,\CA_2,\bar{\CB})$\\
	    }
	    Delete $p$ from $Q'$
 	}
 	\textbf{return} $[S_T,\bar{\CB}]$ 
 	\caption{ $UpdateSteinerTree\_Merge(S_T,Q,\bar{\CB})$}
 	\label{Alg:MupdateSteinetree}
\end{algorithm}

\begin{algorithm}[t]
    \SetAlgoLined
    $\CA_{12} = \CA_1 \cup \CA_2$ \tcp{merge components}
    $D_{\CA_{12}} = D_{\CA_1}\cup D_{\CA_2}\setminus\{t:t\in \CA_{12}\cap T\}$\label{merged-essential}\\
    $OC_{1}:=O_{\CA_1} \cup C_{\CA_1}$, $OC_{2}:=O_{\CA_2} \cup C_{\CA_2}$  \\
    $g_{\CA_{12}}(u) = \min\{g_{\CA_1}(u),g_{\CA_2}(u) \}$ for all $u\in OC_1\cup OC_2$
    \tcp{merge g costs depending on the set it is in}
    $C_{\CA_{12}} = (C_{\CA_1}\cup C_{\CA_2})\setminus \{u: g_{\CA_1}(u) < g_{\CA_2}(u),~ u\in O_{\CA_1}\cap C_{\CA_2} \bigvee g_{\CA_2}(u) < g_{\CA_1}(u),~ u\in O_{\CA_2}\cap C_{\CA_1}\}$ \tcp{Remove nodes from the closed set if the g cost is lower in the open sets}
    $O_{\CA_{12}} = (OC_1\cup OC_2)\setminus C_{\CA_{12}} $\\
    $f_{\CA_{12}}(u) = g_{\CA_{12}}(u) + h(u, D_{\CA_{12}}) ~\forall u\in O_{\CA_{12}}$ \tcp{update fcosts}
    Remove $\bar{A}_1,\bar{A}_2$ from $\bar{\CB}$ and add $\bar{\CA_{12}}$ to $\bar{\CB}$ \\
    \textbf{return} $\bar{\CB}$ 
 	\caption{$ComponentMerge(\CA_1,\CA_2,\bar{\CB})$}
 	\label{Alg:Framework}
\end{algorithm}


Rather than carry out search from single terminals, {\myfont S\textsuperscript{*}-merged} (Algorithm \ref{Alg:S*-merged}) keeps track of the connectivity structure among the terminals in $S_T$ using a set $\CB$ of components which are initialized to singleton terminals. When two components merge, we simply merge the set of terminals in these components in $\CB$. We also carefully extend the definition of open and closed sets to subsets of terminals and ensure we update them so that they obey the conditions that the $g$-values of nodes in the closed sets give optimal paths from the node to some terminal in the component, and that the $f$-values remain lower bounds on reaching a terminal in another component. For this we will also need to ensure that the destination set of the merged components are updated appropriately.

Unlike {\myfont S\textsuperscript{*}-unmerged} where each terminal nominates its best node, in {\myfont S\textsuperscript{*}-merged}, each component nominates a node with the least $f$-cost from its open set. The best nominated node $u_{\CA^*}$ with the least $f$ value is then moved to the corresponding components (${\CA^*}$) closed set. Next, $u_{\CA^*}$ is expanded and the corresponding $g$ and $f$ costs of its neighbors are updated (lines 20-24 in Algorithm \ref{Alg:S*-merged}). If paths are confirmed between terminals in two distinct components, they are also added to $Q$ (line 27 in Algorithm \ref{Alg:S*-merged}). Similar to {\myfont S\textsuperscript{*}-unmerged}, the re-prioritization steps in blue lines 28-31 of Algorithm \ref{Alg:S*-merged} are not mandatory and can be used to speed up the implementation as needed. Finally, {\myfont S\textsuperscript{*}-merged} checks (line 35 in Algorithm \ref{Alg:S*-merged}) if the Steiner tree ($S_T$) and the component structure needs to be updated based on changes in $Q$ or the bounding functions of these components. 

We derive three versions of {\myfont S\textsuperscript{*}-merged} based on the method used to confirm the least-cost paths between components (red line 26 in Algorithm \ref{Alg:S*-merged}). These methods are drawn from three well-known variants, namely the bidirectional Heuristic Search (HS) with the $fmin$ rule~\cite{Pohl:1969}, bidirectional Best-first Search (BS) with the $gmin$-based rule~\cite{Nicholson} which also mimics the classic primal-dual algorithms~\cite{AKR,goemans1997primal}, and Meet-in-the-Middle (MM) Search~\cite{MM}. These versions are correspondingly referred to as S$^*$-HS, S$^*$-BS and S$^*$-MM. The following discussion presents the path criterion used in each of them.

\begin{itemize}
    \item {\bf Path confirmation criterion for S$^*$-HS:} In this version, we check if there is a node $u$ such that the sum of the $g$-values of the shortest paths to $u$ from two different components $\CA$ and $\CA^*$ is at most the larger of the lower bounds for reaching any terminal in the destination sets for $\CA$ and $\CA^*$; in other words, we test if $\min_{u\in V} (g_{\CA^*}(u) + g_\CA(u)) \leq \max(f_{\CA^*}(u_{\CA^*}),f_{\CA}(u_\CA))$. If this is the case, the path we have found via $u$ represents a least-cost path between $\CA$ and $\CA^*$. This stopping condition is also commonly referred to as the {\it ``fmin condition''} for bidirectional heuristic search (Bi-HS)~\cite{Pohl:1969,survey-AAAI18}. 

    \item {\bf Path confirmation criterion for S$^*$-BS:} Let $gmin_{\CA}$ denote the smallest $g$-value among the nodes in the open set of $\CA$. Note that this is the node in the open set that can be confirmed next according to Djikstra's algorithm. We can then use the sum of the values of $gmin_{\CA}$ and $gmin_{\CA^*}$ to check if there is a least-cost path between any terminal in $\CA$ and any terminal in $\CA^*$: $\min_{u\in V} (g_{\CA^*}(u) + g_\CA(u)) \leq gmin_{\CA^*}+gmin_{\CA}$. This is exactly the stopping condition to confirm a path in bidirectional best-first search~\cite{Nicholson,survey-AAAI18} which ensures that property (SP) holds for paths confirmed using this rule. Using this criterion also reduces S$^*$-BS to the conventional primal-dual algorithm \cite{AKR} for the Steiner tree problem. 
    
    \item {\bf Path confirmation criterion for S$^*$-MM:} For MM, we need more definitions. Define $c_{min}$ to be the minimum cost of any edge in the graph. Let the priority of a node $u$ for component $\CA$ be defined as $pr_{\CA}(u) = \max \{f_{\CA}(u), 2g_{\CA}(u)\}$ where the first term denotes a lower bound on the cost to any other component and the second is twice the confirmed cost of connecting a terminal in the component to node $u$. Now, let $prmin_{\CA} = \min_{u \in O_{\CA}} pr_{\CA}(u)$ for any $\CA$. When a pair of components $\CA$ and $\CA^*$ are evaluated for a path between them, we define $C = \min \{ prmin_{\CA}, prmin_{\CA^*} \}$. We can now use the path criterion from MM \cite{MM} to confirm a least-cost path between any terminal in $\CA$ and any terminal in $\CA^*$ as follows: $\min_{u\in V} (g_{\CA^*}(u) + g_\CA(u)) \leq \max \{ C,f_{\CA^*}(u_{\CA^*}),f_{\CA}(u_\CA),gmin_{\CA^*}+gmin_{\CA}+c_{min} \}$. This ensures property (SP) holds for paths confirmed using this version.
    
    
\end{itemize}

The procedure in Algorithm \ref{Alg:MupdateSteinetree}, similar to the {\it UpdateSteinerTree} procedure in Algorithm \ref{Alg:updateSteinetree}, ensures paths are added to $S_T$ only if they satisfy the property (K). A key difference in Algorithm \ref{Alg:MupdateSteinetree} is the addition of new bounds to $f^*$ to ensure different versions of {\myfont S\textsuperscript{*}-merged} can be handled efficiently. To do this, for a component $\CA$, we first define $rmin_{\CA}$ as the minimum $g$-value over all nodes in the boundary of $\CA$, namely those nodes in its closed set with a neighbor in its open set. Intuitively, if we draw a ball of this radius around the terminals in $\CA$, every boundary node will occur only at this distance or later, so if we drew such balls around two different components $\CA$ and $\CA'$, they would be disjoint. We then generalize the definition of $f^*$ used in the Steiner Tree updating algorithm as follows: $f^* = \max \{  \min_{\CA} (\min_{u\in O_\CA} f_{\CA}(u)), \min_{\CA \neq \CA'} (rmin_{\CA} + rmin_{\CA'}) \}$. By the disjointness of these two balls represented by the last term, we can see that using this definition to pick paths satisfies property (K).

\subsection{Correctness of S$^*$-merged}

Since the path confirmation criteria for these three algorithms are directly drawn from the stopping conditions in the corresponding Bi-HS, Bi-BS and MM algorithms, it follows that the three algorithms obey the (SP) property when they confirm paths between components.

The main point of difference in the merged methods from regular source-destination path-finding algorithms is the definition of open and closed sets since they are now for components rather than just the source or destination. 
But this is precisely what is handled in the careful redefinition of these sets for a merged component in Algorithm~\ref{Alg:Framework}. In particular, when components $\CA$ and $\CA'$ merge, if a node is present in the current closed sets of both $\CA$ and $\CA'$, we use the smaller of the two confirmed $g$-estimates for the shortest path to it. However, if it is present in the closed set of $\CA$ and the open set of $\CA'$ but the $g$-estimate is smaller to $\CA'$, then we remove it from the closed set of the merged component since we have a potentially better path from $\CA'$ and since it is still in the open set and not confirmed for its shortest path to $\CA'$. The open set of the merged component is simply those nodes in the union of the open sets of both merging components that are not retained in the closed set. Once the $f$ and $g$ costs of the merged components are updated correctly, it also follows that the update Steiner tree method in Algorithm \ref{Alg:MupdateSteinetree} ensures all the three versions of {\myfont S\textsuperscript{*}-merged} satisfy property (K). This leads to the following theorem. 

\begin{theorem}\label{thm:merged}
The {\myfont S\textsuperscript{*}-merged} framework when specialized to any of the three path confirmation criteria ({\myfont S\textsuperscript{*}-HS}, {\myfont S\textsuperscript{*}-BS}, {\myfont S\textsuperscript{*}-MM}) finds a Steiner tree of cost at most equal to the optimal MGPF cost. This Steiner tree can then be used to obtain a 2-approximation algorithm for MGPF. 
\end{theorem}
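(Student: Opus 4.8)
The plan is to reduce Theorem~\ref{thm:merged} to verifying that {\myfont S\textsuperscript{*}-merged} maintains the two key properties (SP) and (K): once these hold, the cited analyses \cite{Kou1981,Mehlhorn1988} imply that the constructed $S_T$ has cost at most that of an MST in the metric completion of $T$, which is at most the optimal MGPF cost $\mathrm{OPT}$, and doubling $S_T$, extracting an $s$--$d$ Euler-type walk, and shortcutting over repeated and non-terminal visits (Fig.~\ref{fig:SteinerApprox}) yields a feasible MGPF path of cost at most $2\,\mathrm{OPT}$. So the real content is the two invariants, tracked across component merges, and the fact that all three path-confirmation rules are safe.

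For (SP), I would first establish by induction on the iterations of the main loop the structural invariant that for every component $\CA\in\CB$ and every $u\in C_\CA$ we have $g_\CA(u)=\min\{cost^*(t,u):t\in\CA\cap T\}$, while for $u\in O_\CA$ the value $g_\CA(u)$ is the cost of the cheapest path to $u$ discovered so far and $f_\CA(u)=g_\CA(u)+h(u,D_\CA)$ is a valid lower bound on extending such a path to a terminal in $D_\CA$ (using consistency of $\bar h$ and $h(u,S)=\min\{\bar h_t(u):t\in T\cap S\}$). Within a fixed component this is exactly correctness of A$^*$/Dijkstra with a consistent heuristic; the genuinely new case is the merge in Algorithm~\ref{Alg:Framework}, where a node lying in both closed sets keeps the smaller $g$, a node confirmed for one component but still open with a strictly smaller estimate for the other is demoted back to the open set, and $D_{\CA_{12}}$ is reset to $T\setminus\CA_{12}$ with all $f$-values recomputed. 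Verifying that this step preserves the invariant is the technical heart of the argument. Given the invariant, the test on red line~\ref{test} is, in each of the three versions, literally the stopping rule of bidirectional HS (the $fmin$ rule), bidirectional BS (the $gmin$-sum rule), or MM, applied to the two ``super-sources'' $\CA$ and $\CA^*$; correctness of those rules \cite{Pohl:1969,Nicholson,MM} then gives that $PATH^*(terminal_\CA(u),terminal_{\CA^*}(u))$ is a least-cost $G$-path between those terminals, which is (SP).

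For (K), I would show that whenever Algorithm~\ref{Alg:MupdateSteinetree} adds a path $p$ to $S_T$, it joins two currently distinct components (the explicit cycle check) and every terminal-to-terminal path of cost strictly less than $cost(p)$ has already been confirmed and is present in $Q'\cup S_T$ at that moment --- Kruskal's exchange condition on the metric completion of $T$. This follows from $cost(p)\le f^*$ and the definition $f^*=\max\{\min_\CA\min_{u\in O_\CA}f_\CA(u),\ \min_{\CA\neq\CA'}(rmin_\CA+rmin_{\CA'})\}$: the first term is the A$^*$-style lower bound certifying that no cheaper path leaving any component's explored region remains undiscovered, and the second uses that the balls of radii $rmin_\CA$ and $rmin_{\CA'}$ around the terminals of two distinct components are edge-disjoint, so any still-undiscovered terminal-to-terminal path must cost at least $rmin_\CA+rmin_{\CA'}$ for some pair. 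Since paths are popped from $Q'$ in increasing cost order and re-checked for cycles after each merge, the processing order agrees with Kruskal's, giving (K); combined with (SP) this finishes the proof as described above.

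The step I expect to be the main obstacle is the merge bookkeeping in Algorithm~\ref{Alg:Framework}: one must check that demoting a node from the closed to the open set of the merged component never invalidates a previously legitimate confirmation, and that after $D_{\CA_{12}}$ is reset the recomputed $f$-values remain consistent lower bounds --- otherwise both later (SP) confirmations and the $f^*$ bound used for (K) could break. A secondary subtlety is that the three variants make paths confirmable at different ``times,'' so one must argue the ``all cheaper paths already considered'' claim uniformly: for S$^*$-BS it reduces to the classical primal-dual/Dijkstra bound, whereas for S$^*$-HS and S$^*$-MM the extra $rmin_\CA+rmin_{\CA'}$ term in $f^*$ is exactly what compensates for heuristic-driven early confirmations, and verifying that this term never exceeds an already-processed cost is where the care lies.
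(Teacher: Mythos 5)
Your proposal follows essentially the same route as the paper: reduce the theorem to the two invariants (SP) and (K) from the Background section, obtain (SP) from the fact that the three path-confirmation tests are exactly the Bi-HS, Bi-BS and MM stopping rules applied to components whose open/closed sets and $g$-, $f$-values are kept valid by the bookkeeping in $ComponentMerge$, obtain (K) from the condition $cost(p)\leq f^*$ in $UpdateSteinerTree\_Merge$ using the disjointness of the $rmin$-balls around distinct components, and then invoke the standard double-and-shortcut argument of \cite{Kou1981,Mehlhorn1988} for the factor-2 guarantee. The steps you flag as needing care (preservation of the invariants across merges, and that $f^*$ certifies no cheaper unconfirmed terminal-to-terminal path exists) are precisely the points the paper itself treats at the same level of detail, so your plan matches the paper's proof in both structure and substance.
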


\section{Numerical Results}

\begin{figure*}[t]
\centering
\includegraphics[scale=0.81]{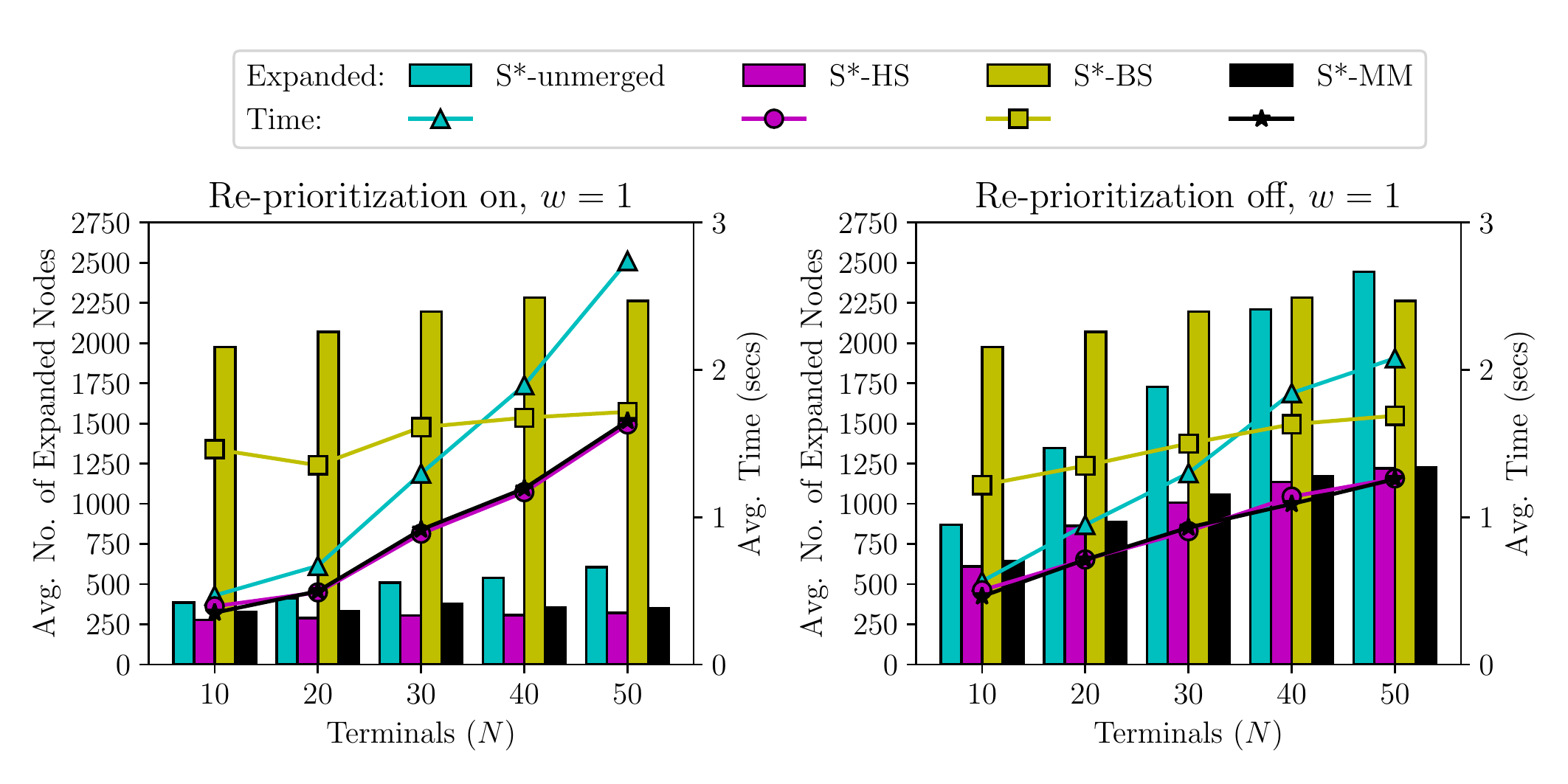} 
\caption{``den312d" map: Average number of expanded nodes and computational times as a function of number of terminals and re-prioritization. $w$ is fixed at 1. For these instances, Naive Kruskal expands between $22005-119805$ nodes with runtimes in $4.01-21.27$ secs as $N$ varies from 10 to 50 terminals.}
\label{fig:res_terminals}
\end{figure*}

\textbf{Setup:} Computational experiments were conducted on a computer with a 2.80 GHz Intel Core i7-7700HQ processor. All algorithms were implemented in Python 3.6 under Ubuntu 18.04. We compared the number of expanded nodes and runtimes of the proposed algorithms, namely {\myfont S\textsuperscript{*}-unmerged}, {\myfont S\textsuperscript{*}-HS}, and {\myfont S\textsuperscript{*}-MM}, against two conventional solvers, the primal-dual (or {\myfont S\textsuperscript{*}-BS}) and the naive Kruskal's approach\footnote{Here, we implement the approach described in the Background and Preliminaries section. First, we compute the least-cost paths between any pair of terminals to find the metric completion. Then use Kruskal's algorithm to find a MST for the metric completion.}. 
Each of the algorithms was evaluated on five separate 8-neighbor type grid maps, obtained via the MAPF benchmark library. These maps (see Table \ref{tab:summary}) were chosen based on the shape of the obstacles (maze or randomized) or their absence. Within each map, a varying number of terminals ({$ N=10,20,30,40,50$}) was randomly generated and placed. For each map and $N$, 10 problem instances were generated. Comparisons were also made with respect to factors such as merging, reprioritization, and heuristic strengths. Due to space constraints, we first present the results for the ``den312d" map in the MAPF library with and without the re-prioritization steps; later, in Table \ref{tab:summary}, we present results for all the maps for a fixed number of terminals with no re-prioritization. 
\vspace{.1cm}

\noindent\textbf{Heuristics via landmarks:} Each map was pre-processed to provide a fast look-up table for heuristic lower-bound estimates between any pair of nodes in the map. Each of these heuristic estimates was then scaled by a weighting factor $w$ to understand its impact on the overall performance of the algorithms. For small maps, heuristic estimates were obtained by computing the least-costs between any pair of nodes in the map using Dijkstra's algorithm. However, for moderately-sized maps, estimates were obtained using the ALT method \cite{goldberg2005computing} in combination with the octile distance. To implement the ALT method, one hundred ``landmarks" were randomly chosen throughout the maps such that the landmarks were ``border nodes" in the graph (with node degree $<$ 8). Dijkstra's algorithm was then used to find the least-cost from each landmark to the remaining nodes in the map; these least-costs were in turn used to compute a lower bound on the least-cost between any pair of nodes in the map. 


\subsection{Comparisons based on expanded nodes and time}

Fig. \ref{fig:res_terminals} shows the average number of expanded nodes and computational time (in secs) as a function of the number of terminals for all the algorithms. 
The weighting factor $w$ for the heuristics in these results was set to 1. The three heuristic-based algorithms ({\myfont S\textsuperscript{*}-unmerged}, {\myfont S\textsuperscript{*}-HS}, and {\myfont S\textsuperscript{*}-MM}), expanded fewer nodes on average than the conventional solvers which did not use any heuristic information. The merged algorithms ({\myfont S\textsuperscript{*}-HS}, {\myfont S\textsuperscript{*}-MM}) outperformed the others in terms of expanded nodes. This trend was consistent across all maps and weighting factors (see Table 
\ref{tab:summary}). On the other hand, with respect to average computation times, the primal-dual algorithm ({\myfont S\textsuperscript{*}-BS}) was competitive in comparison to the other merged versions and {\myfont S\textsuperscript{*}-unmerged} on the tested instances (this can also be observed in Table \ref{tab:summary}). These runtimes were also dependent on whether the re-prioritization steps (both in {\myfont S\textsuperscript{*}-unmerged} and {\myfont S\textsuperscript{*}-merged}) were switched on or off. This will be examined in the next subsection.

\subsection{Impact of re-prioritization}


\begin{figure*}[h!]
\centering
\includegraphics[scale=.81]{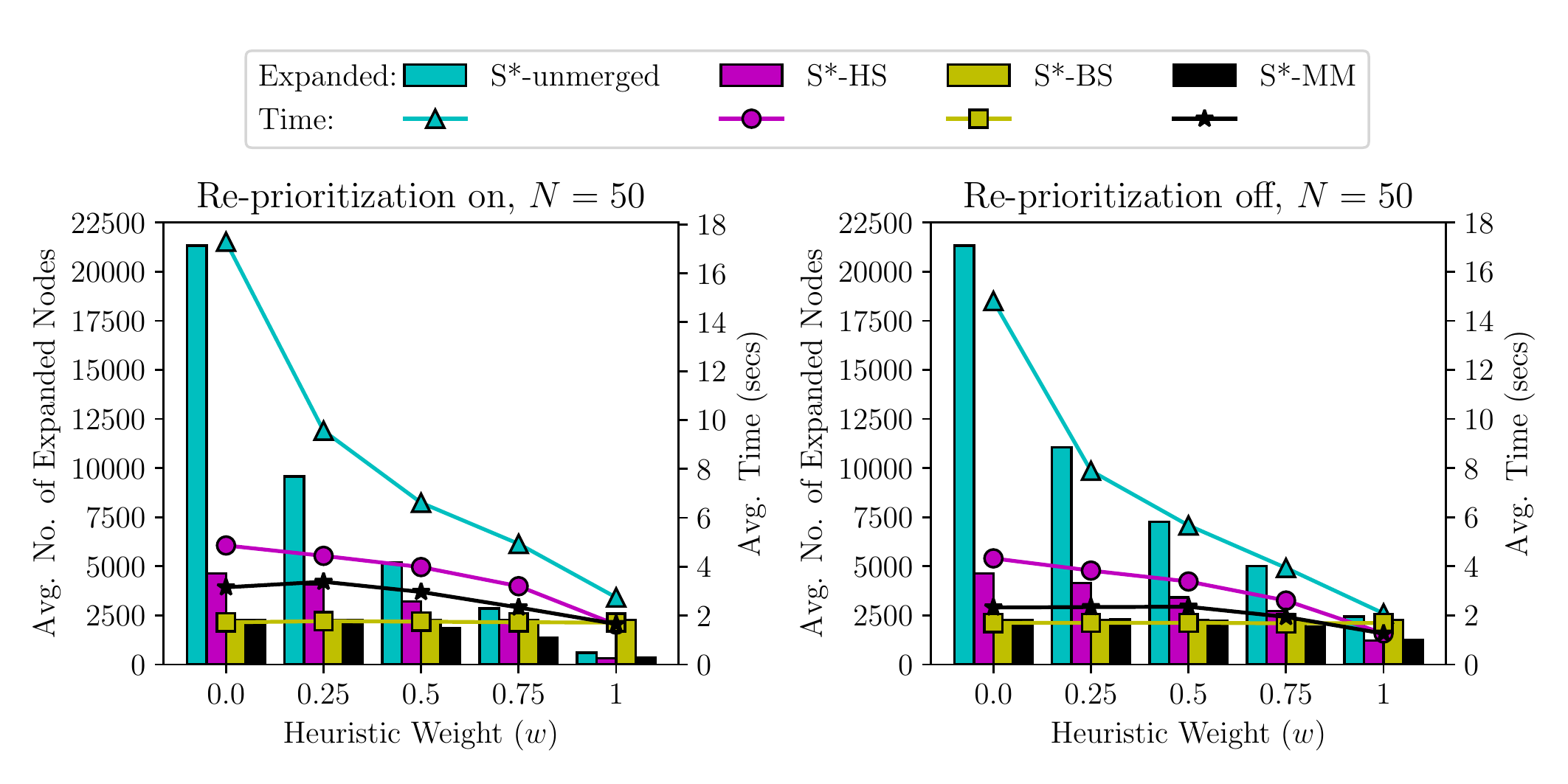} 
\caption{``den312d" map: Average number of expanded nodes and computation times as a function of heuristic strength and re-prioritization. $N$ is fixed at 50.}
\label{fig:res_heuristic}
\end{figure*}

\begin{table*}[ht!]
\centering
\begin{tabular}{@{}cllllll@{}}
\toprule
\multirow{3}{*}{Map}                              & \multirow{3}{*}{Algorithm}     & \multicolumn{5}{c}{Avg. No. of Expanded Nodes (Avg. runtime in secs)}                                              \\ \cmidrule(lr){3-7} 
                                 &           & $w=0$     & $w=0.25$ & $w=0.50$ & $w=0.75$ & $w=1$    \\ \midrule
\multirow{5}{*}{\includegraphics[width=0.25\columnwidth]{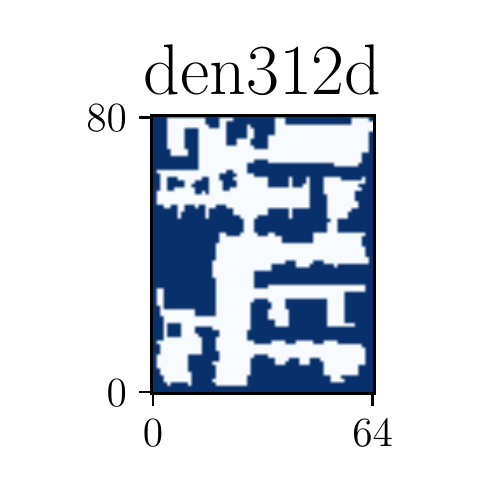}}         & Kruskal   & 119805 (21.29)  & 119805 (21.26) & 119805 (21.26) & 119805 (21.26) & 119805 (21.26) \\
                                 & S*-unmerged  & 21334.6 (14.80) & 11064.8 (7.89) & 7269.6 (5.66)  & 5019.7 (21.26) & 2444.9 (2.07)  \\
                                 & S*-HS  & 4635.2 (4.32)   & 4139.7 (3.82)  & 3412.2 (3.37)  & 2723.2 (2.60)  & 1221.0 (1.26)  \\
                                 & S*-BS   & 2262.6 (1.68)   & 2262.6 (1.69)  & 2262.6 (1.69)  & 2262.6 (1.67)  & 2262.6 (1.68)  \\
                                 & S*-MM     & 2262.6 (2.32)   & 2310.6 (2.33)  & 2223.8 (2.35)  & 1906.7 (1.93)  & 1227.6 (1.25)  \\ \midrule
\multirow{5}{*}{\includegraphics[width=0.2\columnwidth]{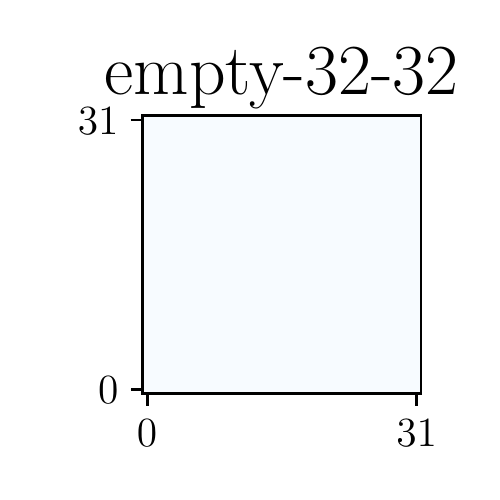}}     & Kruskal   & 50176 (9.08)    & 50176 (9.08)   & 50176 (9.08)   & 50176 (9.08)   & 50176 (9.08)   \\
                                 & S*-unmerged  & 6201.6 (4.52)   & 4022.6 (3.01)  & 2805.6 (2.27)  & 2026.6 (1.69)  & 1159.2 (1.05)  \\
                                 & S*-HS  & 1881.8 (1.84)   & 1657.4 (1.62)  & 1231.2 (1.28)  & 939.9 (0.99)   & 489.2 (0.55)   \\
                                 & S*-BS   & 702.0 (0.55)    & 702.0 (0.57)   & 702.0 (0.56)   & 702.0 (0.55)   & 702.0 (0.56)   \\
                                 & S*-MM     & 702.0 (0.78)    & 714.2 (0.80)   & 776.4 (0.85)   & 702.3 (0.78)   & 492.4 (0.54)   \\ \midrule
\multirow{5}{*}{\includegraphics[width=0.2\columnwidth]{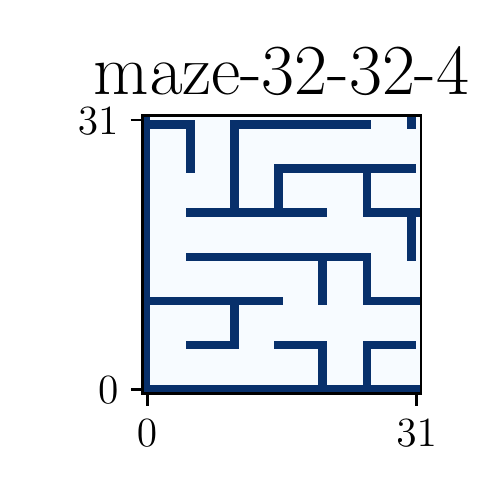}}    & Kruskal   & 38710 (6.70)    & 38710 (6.70)   & 38710 (6.70)   & 38710 (6.70)   & 38710 (6.70)   \\
                                 & S*-unmerged  & 8808.8 (6.17)   & 4740.3 (3.38)  & 3178 (2.40)    & 2361.8 (1.84)  & 1751.9 (1.44)  \\
                                 & S*-HS  & 1461.7 (1.39)   & 1379.7 (1.32)  & 1138.3 (1.14)  & 980.7 (0.97)   & 584.3 (0.59)   \\
                                 & S*-BS   & 768.9 (0.58)    & 768.9 (0.58)   & 768.9 (0.58)   & 768.9 0.58)    & 768.9 (0.59)   \\
                                 & S*-MM     & 768.9 (0.79)    & 775.8 (0.80)   & 759.8 (0.80)   & 710.3 (0.73)   & 583.0 (0.59)   \\ \midrule
\multirow{5}{*}{\includegraphics[width=0.2\columnwidth]{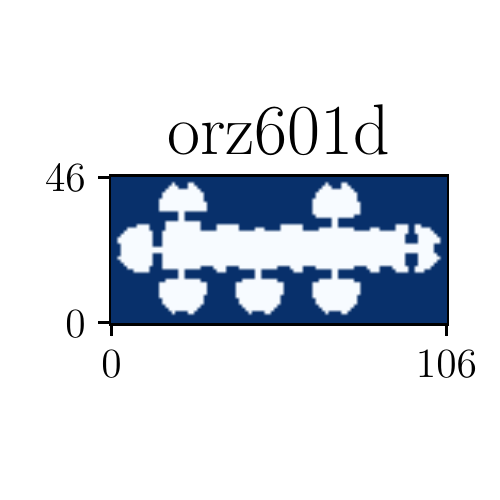}}         & Kruskal   & 92610 (16.48)   & 92610 (16.48)  & 92610 (16.48)  & 92610 (16.48)  & 92610 (16.48)  \\
                                 & S*-unmerged  & 14639.7 (10.42) & 8503.5 (6.26)  & 5745.9 (4.54)  & 3828.9 (3.10)  & 1994.1 (1.76)  \\
                                 & S*-HS  & 3569.2 (3.35)   & 3152.9 (2.99)  & 2515.6 (2.54)  & 1949.4 (1.96)  & 927.3 (0.99)   \\
                                 & S*-BS   & 1672.7 (1.25)   & 1672.7 (1.25)  & 1672.7 (1.26)  & 1672.7 (1.24)  & 1672.7 (1.26)  \\
                                 & S*-MM     & 1672.7 (1.68)   & 1723.5 (1.74)  & 1684.3 (1.26)  & 1424.9 (1.48)  & 931.7 (0.97)   \\ \midrule
\multirow{5}{*}{\includegraphics[width=0.2\columnwidth]{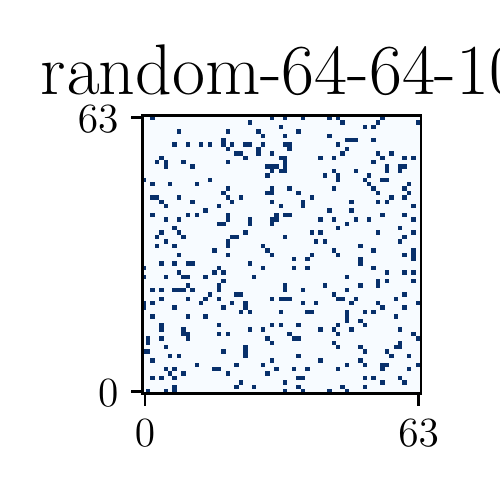}} & Kruskal   & 180663 (31.31)  & 180663 (31.31) & 180663 (31.31) & 180663 (31.31) & 180663 (31.31) \\
                                 & S*-unmerged  & 24270.9 (17.41) & 12976.3 (9.57) & 8055.3 (6.36)  & 4951.1 (2.25)  & 1981 (1.78)    \\
                                 & S*-HS  & 6869.7 (6.45)   & 5779.4 (5.49)  & 4366.9 (4.38)  & 2981.1 (3.01)  & 1111.3 (1.25)  \\
                                 & S*-BS   & 2792.7 (2.07)   & 2792.7 (2.08)  & 2792.7 (2.10)  & 2792.7 (2.09)  & 2792.7 (2.09)  \\
                                 & S*-MM     & 2792.7 (2.87)   & 2899.5 (3.03)  & 2794.2 (2.97)  & 2171.2 (2.25)  & 1137.2 (1.24)  \\ \bottomrule
\end{tabular}
\caption{Summary of results for $N=50$ terminals and varying heuristic weights with no re-prioritization.}
\label{tab:summary}
\end{table*}

The results in Fig. \ref{fig:res_terminals} show that the number of expanded nodes, on average, reduced by nearly 50\% with re-prioritization for algorithms ({\myfont S\textsuperscript{*}-unmerged}, {\myfont S\textsuperscript{*}-HS}, and {\myfont S\textsuperscript{*}-MM}) at the expense of some additional computation time; these reductions also become more pronounced as the number of terminals increased. While re-prioritization did not significantly affect the computation times, the trends show that this will be a factor for a larger number of terminals. This overhead is likely linked to the data structures used for the open sets in the algorithms. Presently, each open-set is implemented using a binary heap based priority queue. More efficient data structures will be investigated in future work. 


\subsection{Impact of the quality of heuristics}

Results are reported here for problem instances with 50 terminals. Fig. \ref{fig:res_heuristic} shows the average number of expanded nodes and computational times for each algorithm as a function of the weighting factor ($w$) used for the heuristics. $w=0$ is equivalent to using no heuristic estimates and $w=1$ corresponds to using the best possible estimates (computed using the landmark based algorithms described earlier). In general, we observed that {\myfont S\textsuperscript{*}-MM} expanded the least number of nodes with lower computational times compared to all the other algorithms (this can also be inferred in Table \ref{tab:summary}). While the algorithms ({\myfont S\textsuperscript{*}-unmerged}, {\myfont S\textsuperscript{*}-HS}, and {\myfont S\textsuperscript{*}-MM}) expanded significantly a fewer number of nodes in comparison to {\myfont S\textsuperscript{*}-BS} when $w=1$ (particularly when re-prioritization is turned on), {\myfont S\textsuperscript{*}-BS} performed better than other algorithms when $w=0$.


There are also subtle differences between the two merged heuristic-based algorithms. Figure \ref{fig:res_heuristic} shows that using stronger heuristics have a greater effect on {\myfont S\textsuperscript{*}-HS} than with {\myfont S\textsuperscript{*}-MM}. This is because 
in general, we observe that MM confirms paths more aggressively than HS especially for less accurate heuristics.
When $w=0$, {\myfont S\textsuperscript{*}-MM} behaves identically to {\myfont S\textsuperscript{*}-BS}, and outperforms {\myfont S\textsuperscript{*}-HS}. When $w=1$, the performance of both  {\myfont S\textsuperscript{*}-HS} and {\myfont S\textsuperscript{*}-MM} are quite similar. 



\subsection{{\textbf{\textit{A-posteriori}}} guarantees of the proposed algorithms}

\begin{table}[]
\centering
\begin{tabular}{@{}clll@{}}
\toprule
Map             & Min   & Avg   & Max   \\ \midrule
den312d         & 1.780 & 1.872 & 1.966 \\
empty-32-32     & 1.788 & 1.900 & 1.976 \\
maze-32-32-4    & 1.680 & 1.846 & 1.977 \\
orz601d         & 1.693 & 1.835 & 1.962 \\
random-64-64-10 & 1.815 & 1.882 & 1.939 \\
 \bottomrule
\end{tabular}
\caption{Minimum, average and maximum {\it a-posteriori} guarantees obtained for all the test instances.}
\label{tab:tree-path-ratio}
\end{table}
 The quality of the solutions obtained by any of the proposed algorithms for MGPF can be inferred by computing the {\it a-posteriori} guarantee, $i.e.$, for a given instance, the {\it a-posteriori} guarantee is defined as the ratio of the cost of the feasible solution obtained by an algorithm and a lower bound to the optimal cost. The minimum, average and maximum {\it a-posteriori} guarantees obtained for the tested instances is shown in Table \ref{tab:tree-path-ratio}. These guarantees are generally lower than the approximation ratio which is a (worst-case) theoretical bound for any instance of the problem. A feasible path is constructed by following the procedure in Fig. \ref{fig:SteinerApprox}. The lower bound to the optimal cost used here is simply the cost of the Steiner tree obtained using any of the proposed algorithms. 

\section{Conclusions}

In this article, a framework called S$^*$ was presented for developing a suite of efficient 2-approximation algorithms for MGPF. Additionally, numerical results were also presented to compare the algorithms from the proposed framework with the conventional solvers in terms of the number of expanded nodes and computation time. Overall, the results show that the version of the proposed framework which uses the MM algorithm \cite{MM} performed the best. Future work can explore decentralized implementations and alternate data structures for faster implementations of S$^*$.



\section{Acknowledgements}

This material is based upon work supported by the Air Force Office of Scientific Research under award number FA9550-20-1-0080, and the Army Research Office under Cooperative Agreement Number W911NF-19-2-0243. The views and conclusions contained in this document are those of the authors and should not be interpreted as representing the official policies, either expressed or implied, of the Army Research Office or the U.S. Government. The U.S. Government is authorized to reproduce and distribute reprints for Government purposes notwithstanding any copyright notation herein.

\bibliography{bib}

\ifthenelse{\boolean{shortver}}{%

}{ 

\onecolumn

\section{Appendix}

Figure \ref{fig:expectation} showed our expected hierarchy on the number of expanded nodes for the algorithms presented in this article. While this expectation is supported by our numerical results, one can certainly develop counterexamples where this hierarchy can be violated. Consider a 2-terminal example with obstacles (Fig.~\ref{fig:app-counter-ex-bare}), where the optimal path cost is 14 units. If a weak heuristic (say Euclidean-distance based) is used, {\myfont S\textsuperscript{*}-HS}, a merged variant (Fig.~\ref{fig:app-counter-ex-hs}), can expand more nodes than primal-dual (Fig.~\ref{fig:app-counter-ex-pd}). On the other hand, if a strong heuristic is used, {\myfont S\textsuperscript{*}-unmerged} (Fig.~\ref{fig:app-counter-ex-um}) can expand fewer nodes than primal-dual.

\begin{figure*}[h]
\centering{\phantomsubcaption\label{fig:app-counter-ex-bare}\phantomsubcaption\label{fig:app-counter-ex-hs}\phantomsubcaption\label{fig:app-counter-ex-pd}\phantomsubcaption\label{fig:app-counter-ex-um}}
\includegraphics[width=1\textwidth]{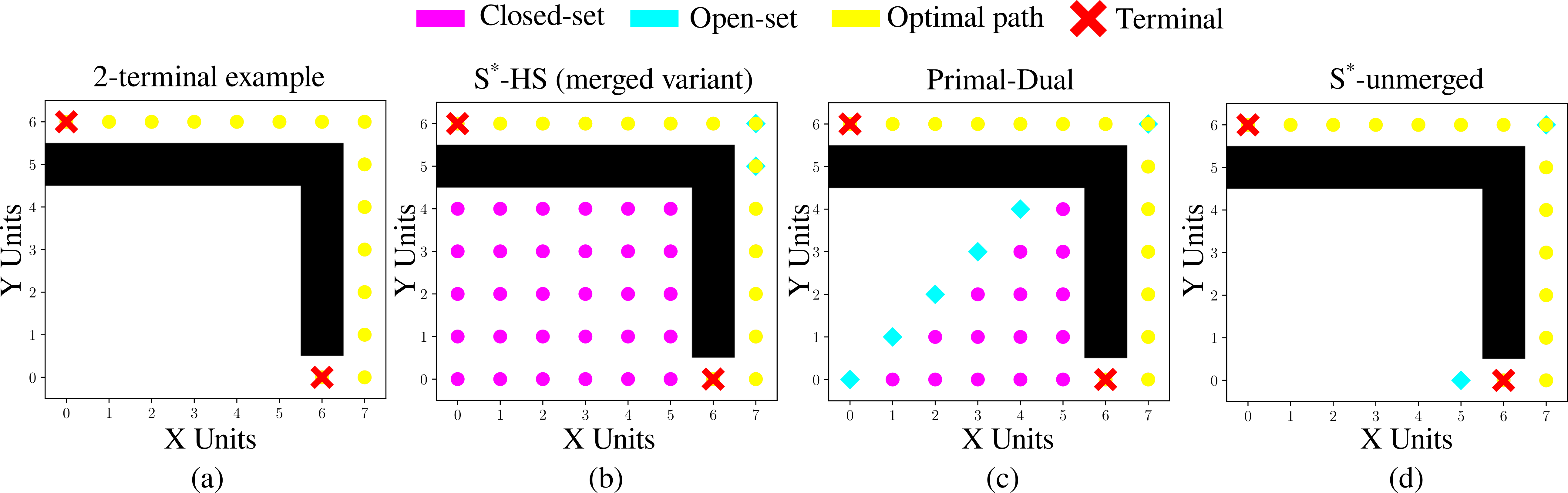} 
\label{fig:app-counter-ex}
\caption{S\textsuperscript{*}-merged/unmerged may expand more or less nodes than primal-dual depending on the heuristic strength. Here, expanded nodes include both the nodes present in the closed set and the optimal path. } 
\end{figure*}

}
\end{document}